\newtheorem{assumption}{Assumption}
\newtheorem{thm}{Theorem}
\newtheorem{lem}{Lemma}
\theoremstyle{definition}
\begin{document}

\title{\huge{Centralized vs. Decentralized Multi-Agent Reinforcement Learning for Enhanced Control of Electric Vehicle Charging Networks }}





\author{Amin Shojaeighadikolaei, Zsolt Talata, Morteza Hashemi
\thanks{Amin Shojaeighadikolaei and Morteza Hashemi are with the Department of Electrical Engineering and Computer Science, and Zsolt Talata is with the Department of Mathematics at the University of Kansas, Lawrence, KS, USA (email: amin.shojaei@ku.edu, talata@ku.edu, mhashemi@ku.edu).}
\vspace{-0.95cm}}
\maketitle
\begin{abstract}
The widespread adoption of electric vehicles (EVs) poses several challenges to power distribution networks and smart grid infrastructure due to the possibility of significantly increasing electricity demands, especially during peak hours. Furthermore, when EVs participate in demand-side management programs, charging expenses can be reduced by using optimal charging control policies that fully utilize real-time pricing schemes. However, devising optimal charging methods and control strategies for EVs is challenging due to various stochastic and uncertain environmental factors.
Currently, most EV charging controllers operate based on a centralized model.
In this paper, we introduce a novel approach for \emph{distributed} and \emph{cooperative} charging strategy using a Multi-Agent Reinforcement Learning (MARL) framework. Our method is built upon the Deep Deterministic Policy Gradient (DDPG) algorithm for a group of EVs in a residential community, where all EVs are connected to a shared transformer. This method, referred to as CTDE-DDPG, adopts a Centralized Training Decentralized Execution (CTDE) approach to establish cooperation between agents during the training phase, while ensuring a distributed and privacy-preserving operation during execution. 
We theoretically examine the performance of centralized and decentralized critics for the DDPG-based MARL implementation and demonstrate their trade-offs. 
Furthermore, we numerically explore the efficiency, scalability, and performance of centralized and decentralized critics. Our theoretical and numerical results indicate that, despite higher policy gradient variances and training complexity, the 
CTDE-DDPG framework significantly improves charging efficiency by reducing total variation by approximately 36\% and charging cost by around 9.1\% on average. Furthermore, our results demonstrate that the centralized critic enhances the fairness and robustness of the charging control policy as the number of agents increases. These performance gains can be attributed to the cooperative training of the agents in CTDE-DDPG, which mitigates the impacts of nonstationarity in multi-agent decision-making scenarios. 
\end{abstract}

\begin{IEEEkeywords}
Multi-agent Reinforcement Learning (MARL), EV Charging Control, Distributed and Cooperative Control. 
\vspace{-0.55cm}
\end{IEEEkeywords}

\section{Introduction}
\IEEEPARstart{T}{he} fundamental challenge in power grid management is power balancing, which is to ensure that electricity generation closely matches variable demand throughout the day. 
Electricity demand is lowest in the morning, increases in the afternoon hours, and peaks in the evening. To meet the demand, system operators constantly adjust the dispatch of various generators with different operating costs during a 24-hour cycle. 
As a result, the price of electricity is not constant during a day; rather, it is considerably more expensive during peak hours and grid-regulating events.
In this context, demand-side management (DSM) programs are used to encourage consumers to shift their consumption to off-peak hours or reduce overall consumption. 
With emerging electricity loads such as electric vehicles (EVs), deploying efficient load-shifting solutions becomes even more critical, since the widespread EV adoption can significantly increase energy demand during peak hours. For example, Fig.~\ref{System Model} illustrates the EV charging network with a shared transformer power source. This network consists of two layers: (i) the physical power layer and (ii) the control layer. In the physical power layer, all EVs are connected to the upstream grid (utility company) via a shared transformer. Given dynamic pricing and underlying constraints in the physical layer (e.g., shared transformer), it is essential to develop effective management and coordination of EV chargers\footnote{The terms ``EV charger'' and ``EV'' are used interchangeably in this paper.} to manage total demand and prevent transformer overload during peak hours~\cite{nimalsiri2019survey}, as well as to minimize charging costs for EV owners~\cite{al2019review,abdullah2021reinforcement}. 


\begin{figure}[t]
   \centering
    \includegraphics[scale=0.304000, trim = 0.4cm 1cm 0.41cm 0.25cm, clip]{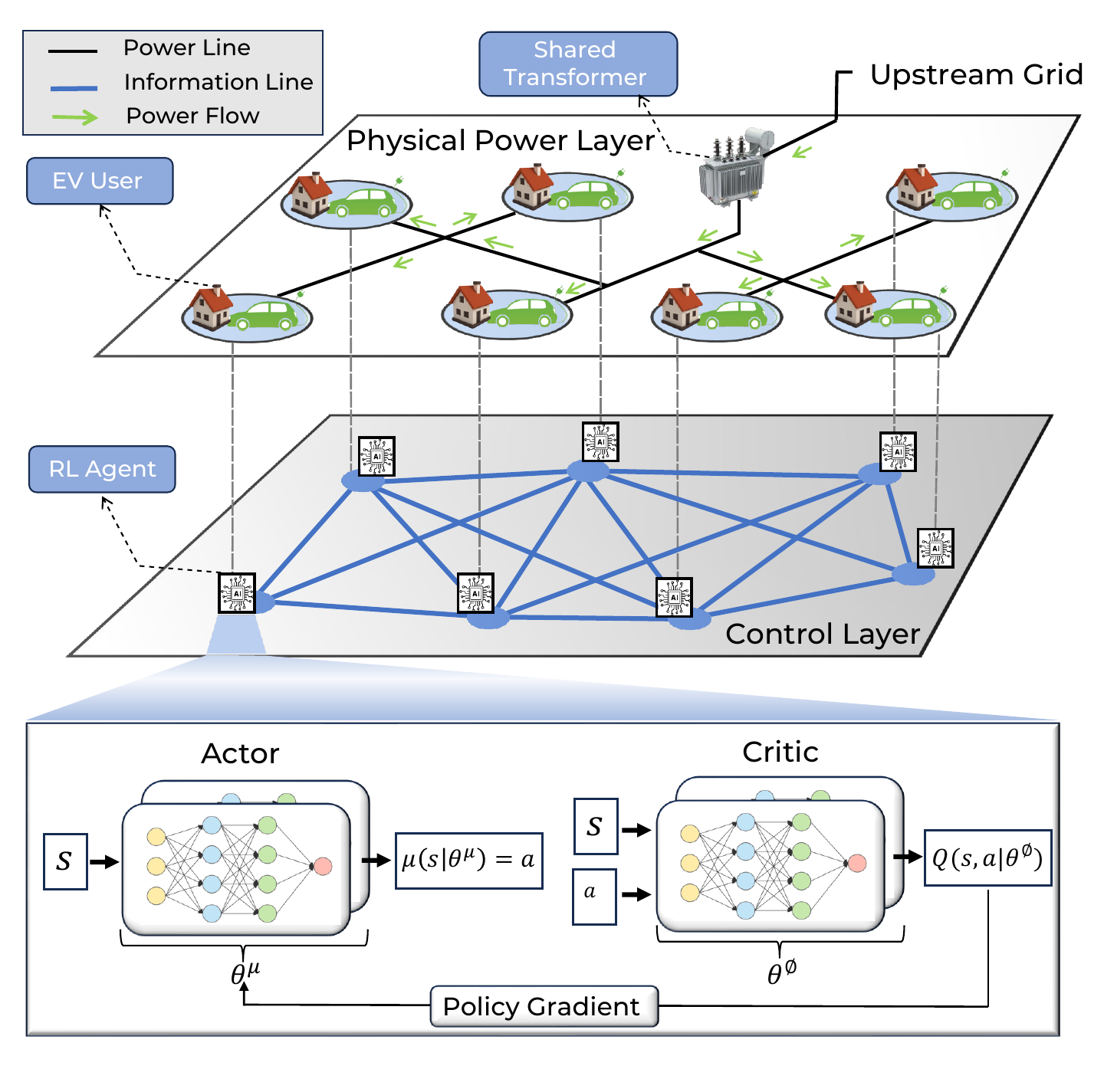}
    \caption{Electric vehicle charging network with a shared energy source.}
    \label{System Model}
    \vspace{-.18in}
\end{figure}

 
 However, achieving optimal charging control faces several challenges, such as: (i) uncertainty in dynamic electricity prices throughout the day, (ii) uncertainty regarding EV owner behavior based on their arrival and departure times, charging preferences, and duration, and (iii) managing congestion and minimizing transformer overload due to the limits of the underlying physical layer, as simultaneous charging of EVs can potentially overwhelm the transformers connected to the network.
Therefore, it is desirable to develop \emph{distributed coordination and cooperation mechanisms} between EV chargers in order to react to real-time grid conditions, while ensuring optimal charging experience in terms of cost, duration, etc.

There is a multitude of prior works on model-based approaches, including binary optimization~\cite{sun2016optimal}, mixed-integer linear programming~\cite{paterakis2016coordinated}, robust optimization~\cite{ortega2014optimal}, stochastic optimization~\cite{wu2017two}, model predictive control~\cite{zheng2018online}, and dynamic programming~\cite{xu2016dynamic},   for EV charging control and optimal scheduling. These model-based methods 
normally require accurate system models, which are often unavailable under uncertain conditions. In contrast, model-free approaches, such as deep reinforcement learning (DRL), do not require an accurate model or prior knowledge of the environment. Previous studies~\cite{wan2018model,zhang2023cooperative,chics2016reinforcement,9792244,9444352,zhang2020cddpg,jin2020optimal} have used single-agent DRL techniques such as Deep Q-learning (DQN), Deep Deterministic Policy Gradient (DDPG) and Soft-Actor-Critic (SAC) for an individual EV or a group of EVs. These studies assume full observability, meaning that the DRL agent has access to the local information of the EVs such as battery level and arrival/departure time. However, this assumption is not practical due to obvious
privacy and security reasons. 

To address this limitation, this paper proposes a \emph{distributed} and \emph{cooperative} strategy for EV charging control using Multi-Agent RL (MARL). We cast the problem of EV charging as a Decentralized Partially Observable Markov Decision Process (Dec-POMDP), and implement DDPG-based MARL agents on top of a residential EV network, as shown in Fig.~\ref{System Model}. 
We propose collaborative control of the EV charging network during \emph{training phase} only. In this framework, the only globally shared observation at \emph{execution phase} is the price of electricity, which is dynamically determined by the operator. 
This model departs from the assumption of sharing global or local information between agents during execution. 

To implement a MARL control strategy using DDPG agents, we explore and contrast two well-known MARL variations: decentralized critic versus centralized critic. In the former, referred to as Independent-DDPG (I-DDPG), each agent has its own critic network that is trained independently, while considering other agents as part of the environment. This independent learning offers reduced computational costs and smaller  policy gradient variances. Nevertheless, ignoring other agents' policies exacerbates nonstationarity experienced by the agents, which in turn impacts the overall learning performance and stability. Alternatively, in the centralized critic, all agents utilize a common critic network during training, while using decentralized actors during execution. This results in a centralized training-decentralized execution (CTDE) framework~\cite{lowe2017multi}, 
which promotes collaboration between agents to mitigate nonstationarity. Nevertheless, the CTDE framework faces challenges in terms of scalability, computational complexity, and higher policy gradient variance~\cite{lyu2021contrasting,kuba2021settling}.

In this paper, we present theoretical and numerical analysis to compare the collaborative CTDE-DDPG and I-DDPG approaches for EV charging control. In particular, we theoretically illustrate that both algorithms have the same expected policy gradient, while  
the CTDE method experiences larger variances in the policy gradient, posing a challenge to the scalability of the framework. However, the CTDE-DDPG method outperforms I-DDPG in the context of EV charging control due to the importance of agent cooperation in reacting to underlying grid conditions, such as dynamic prices, which are typically determined as a function of the total network consumption. In summary, the main contributions of this paper are as follows: 
\begin{itemize}
    \item We formulate the problem of distributed EV charging control as an instance of Dec-POMDP, and examine two variations of MARL with decentralized and centralized critics. In the case of centralized critic, we leverage the CTDE framework to establish cooperation between agents during
the training phase, while relaxing the assumption of observing the global network parameters and exchanging private information between agents during execution. 
    \item We theoretically analyze the performance of CTDE-DDPG and I-DDPG methods that have centralized and decentralized critic networks, respectively. We show that both methods converge to the same
expected policy gradient. 
Furthermore, the centralized critic has a larger variance in the policy gradient, which adversely affects the scalability of CTDE-DDPG.     
    On the other hand, the CTDE-DDPG algorithm combats nonstationarity due to the cooperation between agents during training. 
    \item We provide a comprehensive set of numerical results for EV charging control. The results show that CTDE-DDPG outperforms I-DDPG in terms of charging total variation, charging cost, and fairness across agents. These performance gains are attributed to the cooperative behavior of the EV charging controllers to collectively respond to the electricity price signal and reduce overall consumption during peak hours, thus providing economical gains for all participants in the network.
    The performance of CTDE-DDPG and I-DDPG for EV charging control is evaluated with up to 20 agents. 
\end{itemize}

\noindent 
This paper extends our prior work in~\cite{10313469}, with two main enhancements: (i) we present theoretical results comparing the CTDE-DDPG and I-DDPG in terms of the average and variance of the policy gradient, and  
(ii) we examine the scalability, performance, and robustness of CTDE-DDPG and I-DDPG frameworks 
under more realistic EV charging scenarios with up to 20 agents. 
The paper's structure is as follows: Section~\ref{Section:RelatedWork} reviews related works. Section~\ref{Section:SystemModel} presents the system model, followed by the algorithm's principles in Section~\ref{Section:Principal}. Section~\ref{Section:ControlStrategy} provides the MARL control strategy for the EV control problem. Numerical results are presented in Section~\ref{Section:Results} and Section~\ref{Section:Conclusion} concludes the paper.

\section{Related Work}\label{Section:RelatedWork}
\textbf{Price-aware EV charging control.} Electricity utilities have always investigated different approaches to encourage end users to participate actively in DSM programs by shifting their
consumption to off-peak hours. For instance, Time-of-Use (ToU) pricing is one of the well-known examples of a price-based DSM program. ToU represents the simplest pricing model with \emph{pre-defined} peak and off-peak time intervals, each with a tiered pricing system.
In the context of EV charging control, several researchers have presented model-based and model-free approaches for EV scheduling with ToU pricing~\cite{8910498,8345660,9672089}. 
As an extension to the ToU pricing,
real-time pricing (RTP) is more sophisticated with dynamic prices (as opposed to pre-defined structures) to balance real-time demand and load-shifting to off-peak hours~\cite{tao2020real}. In this paper, we propose an RL-based EV charging framework that is compatible with the real-time pricing scheme. In this context, there is a growing body of related work focused on model-free RL solutions for EV charging control and scheduling problems. These works fall into two categories: single-agent and multi-agent reinforcement learning methods, which are described next.

\textbf{Single-agent RL for EV charging control}. Under the assumption of complete observability of the environment, it is feasible to train a single RL agent to centrally control either an individual EV charger or a group of EV chargers. Deep Q-learning~\cite{wan2018model,zhang2023cooperative}, Bayesian Neural Networks~\cite{chics2016reinforcement}, Advantage-Actor-Critic (A2C)~\cite{9792244,9444352}, DDPG~\cite{zhang2020cddpg}, and Soft-Actor-Critic (SAC)~\cite{jin2020optimal} have been applied within this paradigm. In~\cite{zhang2023cooperative,wan2018model}, and~\cite{zhang2020cddpg} a Long Short-Term Memory (LSTM) network connected to an RL agent was used to capture the temporal uncertainty of renewable energy sources and electricity prices. However, the use of a single-agent setup for a group of EVs introduces privacy and scalability issues, especially as the number of EVs increases. To address these challenges, multi-agent RL frameworks have been proposed as a potential solution. 


\textbf{Multi-agent RL for EV charging control.} Several studies have investigated EV charging control using distributed and multi-agent approaches. Qian \emph{et al.}~\cite{9536423} proposed an independent multi-agent DQN framework to learn charging pricing strategies of multiple EV stations. Similarly, Lu \emph{et al.}~\cite{9631970} leveraged multi-agent SAC for strategic charging pricing of charging station operators. These studies did not model and investigate cooperation between agents. To address this gap, the authors in~\cite{9670726} proposed an independent multi-agent SAC method using an attention layer that learns the coordination of charging behavior of EVs. In another work~\cite{9762523}, a DDPG-based MARL algorithm was proposed for EV coordination with parameter sharing using an aggregator network. Other related works~\cite{chu2022multiagent,zhang2022federated,10225344} modeled the coordination of different EV users as federated reinforcement learning, where a global aggregator network is used to handle cooperation between EV users. All the aforementioned multi-agent studies have used the parameter sharing to model the cooperation between the agents. A recent work by Yan \emph{et al.}~\cite{yan2022cooperative} introduced a cooperative MARL framework for residential EV charging. They used a neural network to approximate agent behaviors and employed the SAC method. Nevertheless, they assumed that all EV users have access to the total electricity demand at any given time, which is not feasible in realistic scenarios. 

The primary focus of this paper is on the charging control of the EV network by highlighting the importance of cooperation between agents in terms of their charging decisions. 
In contrast to~\cite{9536423,9631970}, we model the EV charging network problem by considering the cooperation between the EVs. Furthermore, compared with~\cite{9670726,9762523,chu2022multiagent,zhang2022federated,10225344}, our agents exchange information during training only, and not during the execution phase, to preserve privacy. Furthermore, unlike the method proposed in~\cite{yan2022cooperative}, our method relaxes the assumption that each EV can observe the entire network demand at any given moment. To this end, we use the MA-DDPG algorithm~\cite{lowe2017multi} as an off-policy MARL framework, which is also compatible with continuous action spaces.
Furthermore, this framework allows the DDPG agents to effectively cooperate and coordinate their EV charging actions, and thus collectively respond to dynamic grid conditions.

\vspace{-0.05in}
\section{System Model and Problem Formulation}\label{Section:SystemModel}
In this section, we present the EV charging network model and formally define the problem of optimal charging control for EV networks.

\vspace{-.5cm}
\subsection{System Model}
\textbf{EV Charging Network Model.} As shown in Fig.~\ref{System Model}, the scenario we consider includes multiple end-users who share a common energy source, such as a transformer connected to the distribution system. We model the EV network in Fig.~\ref{System Model} as a graph $\mathcal{G} = (\mathscr{V},\mathscr{E})$, where $\mathscr{V} = \{0,1,...,N\}$ and $\mathscr{E} = \{1,2,...,M\}$ represent the set of nodes (users) and edges (branches), respectively. Node zero is considered to be the connection to the shared energy source (transformer). Each user is equipped with an energy consumption scheduler (ECS) installed in their smart meter. The smart meters automatically interact  using a distributed framework to determine optimal energy consumption for EVs. To model each individual user in the network, let us define $l_i^h$ as the total consumption of the household $i$ at time $h$, where $ h \in \mathcal{H} = \{0,..., H\}$ and  $H$ denotes the last time of charging phase. Let $\mathcal{X}_i$ denote the set of appliances for user $i$. Thus, the total consumption of the individual household is obtained as follows:
\begin{equation}\label{IndividualConsumption}
l_i^h = \sum\limits_{a \in \mathcal{X}_i}{x_{i,a}^h} = \sum\limits_{a \in \mathcal{X}_i \char`\\\{EV\}}{x_{i,a}^h} + x_{i,EV}^h,
\end{equation}
where $x_{i,a}^h$ denotes the consumption of the appliance $a$ at time $h$ for user $i$.  Thus, $L_h = \sum\limits_{i \in \mathscr{V}}{l_i^h}$ represents the total network consumption at time $h$. Because our focus is on EV charging control, we assume that EV usage is the dominant term and neglect other appliance usages. 



\textbf{Dynamic Pricing Tariffs Model.} In electricity marketplace, electricity price is the only signal that is observable by all users through the network. In this paper, we define a function $\mathrm{F}_h(L_h)$ indicating the electricity price, which is a function of the total network consumption at time step $h$. In particular, we make the following assumption throughout this paper:
\begin{assumption}
The price function is increasing in terms of total electricity demand, such that for each $h \in \mathcal{H}$, the following inequality holds:
\begin{equation}
    \mathrm{F}_h(\Tilde{L}_h) < \mathrm{F}_h(L_h)   \ \ \ \ \text{if} \ \ \Tilde{L}_h < L_h.
\end{equation}
\end{assumption}
\begin{assumption}
The price function is strictly convex. That is, for each $h \in \mathcal{H}$, any real number $L_h,\Tilde{L}_h \geq 0$, and any real number $0 \leq \theta \leq 1$, we have:
\begin{equation}
    \mathrm{F}_h(\theta L_h + (1-\theta)\Tilde{L}_h) < \theta\mathrm{F}_h(L_h) + (1-\theta)\mathrm{F}_h(\Tilde{L}_h).
\end{equation}
\end{assumption}
\noindent 
An example for such an electricity price function that satisfies the  aforementioned assumptions is the quadratic function. In this paper, we consider the price function as follows:
\begin{equation}\label{Eq:Price}
    \mathrm{F}_h(L_h) = a L_h^2 + b L_h + c,
\end{equation}
where $a, b$, and $c$ are cost coefficients. The price of electricity is a function of the total demand of the network. In this paper, we assume that users do not have knowledge of the underlying price function; instead, they only have access to periodic samples of the electricity price, without any prior information about how the price function is set. Therefore, to better interact with the network and optimize the charging experience, users need to learn the price function. To this end, RL is useful to help agents learn the price function based on collected samples over time. 

\vspace{-.1cm}
\subsection{Centralized EV Network Optimization}
Our objective is to minimize the energy cost of the EV network while meeting the battery requirements of EV owners within the charging period. The primary aim of the network participants is to collaborate with each other to achieve this goal. To incentivize the participants in the cooperation task, a dynamic pricing scheme has been considered. As a result of dynamic and load-dependent pricing, we note that minimizing the charging costs could prevent transformer overload during the charging phase as well. This is because simultaneous charging of the EVs increases the charging costs, and thus an optimal charging strategy would effectively avoid this situation and prevent overheating of the transformers.  In our system model, the price signal is the only information that is broadcast to the end-users, and the users' aggregated demand is the only information sent back to the utility company.  Thus, considering~\eqref{IndividualConsumption}, we aim to minimize the network total cost subject to the constraints on the EV battery charge and arrival/departure times. Therefore, we define the EV charging control problem as follows:
\begin{align}
\min_{l_i^h} \quad & \sum_{h=1}^{H} \mathrm{C}_h\left(\sum_{i \in \mathcal{V}} l_i^h\right) \label{network_cost} \\
\textrm{s.t.} \quad & B_i(h+\Delta h) = B_i(h) + \eta \times l_i^h \times \Delta h, \label{eq:con:a1} \\
& 0 \leq B_i(h) \leq B_i^{\max}, \label{eq:con:a2} \\
& 0 \leq l_i^h \leq l_i^{\max,h}, \label{eq:con:a3} \\
& 0 \leq h_i^{\text{arr}} < h_i^{\text{dep}} \leq H. \label{eq:con:a4}
\end{align}
where $\mathrm{C}_h$ in \eqref{network_cost} denotes the electricity cost function at time step $h$, which is obtained as the total demand multiplied by the unit price, i.e., $\mathrm{C}_h = L_h F_h(L_h)$~\cite{mohsenian2010autonomous}. Constraints~\eqref{eq:con:a1} and \eqref{eq:con:a2} relate to the EV battery model in which $\eta$ denotes the charging efficiency factor; $B_i(h)$ is the battery state-of-charge at time $h$; and $B_i^{\max}$ denotes the EV battery capacity for EV user $i$. In \eqref{eq:con:a3}, we impose a maximum power consumption for user $i$ at time step $h$. 
Additionally, $h_i^{\text{arr}}$ and $h_i^{\text{dep}}$ in \eqref{eq:con:a4}, respectively, represent the arrival and departure times of the $i^{th}$ EV.


This optimization problem can be solved in a centralized fashion using convex optimization techniques such as the interior point method (IPM)~\cite{mohsenian2010autonomous}. However, doing so necessitates a centralized controller with access to all users' data, which introduces scalability, privacy, and security issues. Therefore, it is desirable to devise distributed control policies that can be implemented in each smart meter for its charge control functionality, with the least amount of information exchange with the energy source and other smart meters.

\vspace{-0.1in}
\subsection{Distributed EV Network Optimization}
To solve the problem in~\eqref{network_cost} in a decentralized fashion, we need to define the total network optimization problem from an individual user perspective. To do this, we define $b_i^{h}$ as the charging cost of user $i$ at time $h$. At any given time, users are charged proportional to their total energy demand. This means:
\begin{equation}\label{ProportionalDemand}
    \frac{b_{i}^h}{b_{m}^h} = \frac{l_{i}^h}{l_{m}^h} \ \ \ \forall i,m \in \mathscr{V}.
\end{equation}
By using~\eqref{ProportionalDemand},   the total cost of the network from the $m^{th}$ user's perspective at time $h$ is given by:
\begin{equation}\label{extended_proportionalDemand}
    \sum\limits_{i \in \mathscr{V}}{b_{i}^h} = \sum\limits_{i \in \mathscr{V}}{\frac{{b_{m}^h} \times l_{i}^h}{l_{m}^h}} = \frac{b_{m}^h}{l_{m}^h} \sum\limits_{i \in {\mathscr{V}}}{l_{i}^h}.
\end{equation}
Together from \eqref{network_cost}, \eqref{ProportionalDemand}, and \eqref{extended_proportionalDemand} for each user we have:
\begin{align}\label{billing}
    b_{m}^h = \frac{l_{m}^h}{\sum\limits_{i \in \mathscr{V}}{l_{i}^h}}{\sum\limits_{i \in \mathscr{V}}{b_{i}^h}} = \frac{\kappa \times {l_{m}^h}}{\sum\limits_{i \in \mathscr{V}}{l_{i}^h}}\mathrm{C}_h\left( \sum\limits_{i \in \mathscr{V}}{l_i^h}\right) =
    \nonumber \\
    \frac{\kappa \times {l_{m}^h}}{\sum\limits_{i \in \mathscr{V}}{l_{i}^h}} \mathrm{C}_h\left( {l_m^h} + \sum\limits_{i \in \mathscr{V} \char`\\\{m\}}{l_{i}^h}\right),
\end{align}
where $\kappa$ is a constant coefficient. Equation~\eqref{billing} illustrates that at any given time, the cost of user $m$ depends not only on its local consumption $l_m^h$, but also on the total consumption of other users given by $\sum_{i \in \mathscr{V} \char`\\\{m\}}{l_{i}^h}$. 
Therefore,  each agent $m$ aims to minimize its cost function by adjusting its charging power $l_m^h$ defined as follows:  
\vspace{-0.1in}
\begin{align}\label{NetwrokOptimization}
&\min_{l_{m}^h} \quad \frac{\kappa \times l_{m}^h}{\sum\limits_{i \in \mathscr{V}} l_{i}^h}  \mathrm{C}_h\left(l_{m}^h + \sum\limits_{i \in \mathscr{V} \setminus\{m\}} l_{i}^h \right).  
\end{align}
The optimization objective in~\eqref{NetwrokOptimization} represents the total network cost from a single end-user's perspective at time $h$. Considering the corresponding constraints, the user $i$ can solve the problem in~\eqref{NetwrokOptimization} as long as it knows the total EV consumption of other users, without requiring detailed information about the consumption of each individual EV within the network. 
This problem has been solved in~\cite{mohsenian2010autonomous} using game-theory with two assumptions: (1) End-users are charged in proportion to their energy usage, independently of their usage time. This assumption is not compatible with the dynamic and real-time pricing method, by which the charging cost also depends on the time of use. (2) The daily energy consumption of all appliances, including EV, should be predetermined. The authors in~\cite{yan2022cooperative} solved this problem by relaxing these two assumptions but assumed that each EV is capable of observing the total demand of the network at any given time.
However, this information cannot be obtained by the users in real world scenarios. 
Hence, we pose this question that \emph{how can user $i$ solve the problem defined in~\eqref{NetwrokOptimization} locally without knowing about other users' EV consumption?} To address this, next we present an algorithm that establishes a distributed solution for the EV network control.

\section{Principles of the algorithm}\label{Section:Principal}
In this section, we present the principles of the algorithm required to develop a distributed charging control for EV networks. To this end, first we review the foundations of the Policy Gradient method, and then present the agent setup for our formulated problem.  

\vspace{-.3cm}
\subsection{Policy Gradient Method}\label{PolicyGradientTheorem}
In contrast to Q-learning, which involves the learning of a Q-function to subsequently derive a policy by maximizing the Q-function within a given state, the policy gradient method directly optimizes an agent's policy $\pi$ that is parameterized by $\theta^\pi$. The core concept of the policy gradient method revolves around adjusting the policy parameter $\theta$ in the direction of the gradient $\nabla_{\theta} J(\theta^\pi)$ in order to maximize $J(\theta) = \mathbb{E}_{a \sim \pi}[R]$, 
where $a$ and $R$ are the action and reward terms, respectively. According to the policy gradient theorem~\cite{sutton1999policy,silver2014deterministic}, the gradient is computed as follows: 
\begin{align}\label{Policy-Gradient}
    \nabla_{\theta} J(\theta^\pi) = \int_{\mathcal{S}} \rho^\pi(s) \int_{\mathcal{A}} \nabla_{\theta}\pi(a|s)Q^\pi (s,a)dads = \nonumber \\ \mathbb{E}_{s \sim \rho^\pi,a \sim \pi}[\nabla_{\theta}\log \pi(a|s)Q^\pi (s,a)],
\end{align}
where $\rho^\pi(s)$ denotes the state distribution that does not depend on the policy parameters. 
One of the technical challenges is how to estimate the action-value function $Q^\pi (s,a)$.
One simple approach is to use a sample return to estimate the value of $Q^\pi (s,a)$, which leads to a variant of the REINFORCE algorithm~\cite{williams1992simple}. 

\textbf{Deep Deterministic Policy Gradient (DDPG)} is an extension of the policy gradient framework with deterministic policy $\mu$. Note that for the notation clarity, we use $\mu$ to denote deterministic policies. DDPG consists of two networks: Actor and Critic. The term deterministic refers to the fact that the actor network outputs the exact action instead of the probability distribution over the actions, that is, we have $\mu(s) = \arg\max_a{Q(s,a)}$. At any given time $h$, the parameterized actor function $\mu(a\mid s)$, with parameter $\theta^\mu$, represents the policy that deterministically maps states to specific actions. In addition, the critic network describes the action-value function $Q^\mu(s,a)$ parameterized by $\theta^{\phi}$. Similar to Deep Q-learning (DQN), DDPG also employs a target network and operates as an off-policy algorithm, gathering sample trajectories from an experience replay buffer. The experience replay buffer $\mathcal{D}$ contains the tuple $\langle s, a, r, s^\prime \rangle$ and the action-value function $Q^\mu(s,a)$ is updated as:
\begin{equation}
    \mathcal{L}(\theta) = \mathbb{E}_{s,a,r,s^\prime}[(Q^\mu(s,a)-y)^2],
\end{equation}
where $y=r + \gamma Q^{\mu^\prime}(s^\prime, a^\prime)|_{a^\prime = \mu^\prime(s^\prime)}$, and $\mu^\prime$ is the target policy.

\vspace{-.3cm}
\subsection{Agent Setup for EV Network}
In our system model presented in Fig.~\ref{System Model}, each RL agent interacts with the environment such that its goal is to collect the maximum reward possible from the environment through its actions. This scenario can be modeled as a decentralized partially observable Markov decision process (Dec-POMDP), which is an extension of an MDP process into decentralized multi-agent settings with partially observability. A Dec-POMDP is formally defined by the tuple $ \langle \mathcal{I}, \mathcal{S}, \mathcal{A}, \mathcal{O}, \mathcal{T}, \mathcal{R}, \gamma \rangle$, in which $\mathcal{V}$ is the set of agents, $\mathcal{S}$ is the set of states, $\mathcal{A}$ is the joint action set, $\mathcal{O}$ is the joint observation set, $\mathcal{T}: \mathcal{S}\times \mathcal{A} \times \mathcal{S} \rightarrow [0,1]$ is the transition probability function, $\mathcal{R}:\mathcal{S} \times \mathcal{A} \rightarrow \mathbb{R}$ is the reward function set, and $\gamma \in (0,1)$ is the discount factor.

Dec-POMDPs represent a sequential decision-making framework that extends single-agent scenarios by considering joint observations and joint actions across multiple agents. At each time step, a joint action $\mathbf{a} = <a_h^1, a_h^2, ..., a_h^{|\mathcal{I}|}>$, $\mathbf{a} \in \mathcal{A}$, is taken. In Dec-POMDPs each agent knows its own individual action, but there is no information of other agents' actions. Furthermore, each agent is only able to observe a subset of the environment states due to various factors such as physical limitations, and data privacy and security. After taking an action, each agent receives its corresponding immediate reward $r_h^i, i \in \mathcal{I}$. For the EV charging network, we define each agent's action set, observation set, and reward function as follows:

\textbf{Action Set}: Each agent $i$ has a continuous action set $\mathcal{A}_i = \{a_i: 0\leq a_i \leq a^{\max}, a^{\max} > 0 \}$. The continuous action represents the charging power. The EV battery level is calculated as $B_i(h+1) = B_i(h) + \eta \times a_i^h \times \Delta h$, where $B_i(h)$ is the battery level at time $h$, $\eta$ is the battery efficiency, $a_i^h$ is the charging power in $kW$, and $\Delta h$ is the charging period over which the charging power remains constant. 

\textbf{Observation Set}: 
During the training phase, the observation set for each agent $i$ is defined as
$o_i = \{ \Delta B_i^h, \Delta h_i, \mathrm{F}_h, Pl_i, h_i^{\text{dep}} \}$, where $\Delta B_i^h = B_i^{\text{exp}} - B_i(h)$ is the difference between the desired battery level and the current battery level at time step $h$. Furthermore, $\Delta h_i = h - h_i^{\text{arr}}$ represents the difference between the arrival time $h_i^{\text{arr}}$ and the current time step, and $\mathrm{F}_h$ is the electricity price at time step $h$. $Pl_i$ is a binary flag such that $Pl_i=0$ represents the EV $i$ is not connected to the charging network and $Pl_i=1$ otherwise.
Furthermore, $h_i^{\text{dep}}$ denotes the departure time of EV $i$. 

\textbf{Reward Function}: 
In MA-DDPG, each agent has its own reward function $r_i^h$,  
which represents the immediate reward for the agent $i$ at time $h$ that is obtained by taking action $a_i^h$ and  the state transition from $s_i^h$ to $s_i^{h+1}$. According to the objective of user satisfaction and network requirements, we define the reward function as follows:
\begin{equation}
    r_i^h = - \alpha_1 \times \mathrm{F}_h \times a_i^h - \alpha_2 \times (\Delta B_i^h)^2 + \mathcal{E} \times \mathbbm{1}\{\Delta B_i^\text{dep} > \sigma\},
\end{equation}
where $\alpha_1$ and $\alpha_2$ are constant coefficients, and $\mathcal{E}$ is a penalty term to provide a large negative reward based on the distance from the expected battery level, such that if $\Delta B_i^\text{dep}$ is larger than the threshold $\sigma$ (that is set based on the charging preference of the user), the agent is penalized by $\mathcal{E}$.

\vspace{-.2cm}
\section{Multi-Agent Control Strategy}\label{Section:ControlStrategy}
In this section, we first examine two variations of the MARL methods for decentralized EV charging control as formulated in ~\eqref{NetwrokOptimization}. 
These two variants are recognized as the centralized critic and the decentralized critic. 
Next, we present a theoretical analysis to explore the convergence and performance of these two variants in order to highlight their advantages and disadvantages in the learning process.

\vspace{-.3cm}
\subsection{Multi-Agent Methods}
Consider the EV network with ${N}$ agents that have deterministic policies 
$\boldsymbol{\mu}=\{ \mu_1, ..., \mu_N \}$ 
 parameterized by $\boldsymbol{\theta} = \{\theta_1,..., \theta_N \}$.
 To implement decentralized control for the proposed EV network, we consider two MARL variants: decentralized critic vs. centralized critic.

\textbf{Decentralized Critic Method:} Among the decentralized policy gradient variants, we first consider the Independent Deep Deterministic Policy Gradient (I-DDPG), where each agent $i$ trains the decentralized policy $\mu_i(a_i\mid o_i)$ and the critic $Q_i^{\mu}(o_i, a_i)$. In this method, each agent has an actor-critic architecture, where both actors and critics are trained based on local observations of the agent. The decentralized critic policy gradient can be derived as follows:
\begin{align}\label{IDDPG_return}
    \nabla_{\theta_i} J_d(\theta_i^\mu) = \mathbb{E}_{o_i,a_i \sim \mathcal{D}} [\nabla_{\theta_i}\mu_i(a_i|o_i)\nabla_{a_i}Q_i^\mu (o_i, a_i)|_ {a_i=\mu_i(o_i)}],
\end{align}
where $o_i$ and $a_i$ are the observations and action of agent $i$ sampled from replay buffer $\mathcal{D}$. The policy $\mu_i$ and the critic $Q_i^\mu$ are approximated with the actor and critic deep neural networks, respectively. 

\textbf{Centralized Critic Method:} Similarly, we consider another decentralized multi-agent framework called Centralized Training Decentralized Execution DDPG (CTDE-DDPG). In this method, each agent uses the centralized action-value function $\hat{Q}_i^\mu(\mathbf{o}, a_1,a_2,...,a_N)$ (which is parameterized by $\theta_i^\phi$) in order to update the decentralized policy $\mu_i(a_i \mid o_i)$ (which is  parameterized by $\theta_i^\mu$). The centralized critic estimates the return on the joint observations and actions, which differs from I-DDPG method. Thus, the gradient of expected return in~\eqref{IDDPG_return} will be extended as follows:
\begin{align}\label{CTDE_return}
    \nabla_{\theta_i} J_c(\theta_i^\mu) = \mathbb{E}_{o,a \sim \mathcal{D}} [\nabla_{\theta_i}\mu_i(a_i\mid o_i) \nabla_{a_i}\hat{Q}_i^\mu (o_1,a_1,... \nonumber \\ ,o_N,a_N) |_{a_i=\mu_i(o_i)}].
\end{align}

CTDE-DDPG uses the actions and observations of all agents in the action-value functions  $\hat{Q}_i^\mu$. Furthermore, as the policy of an agent (i.e., $\mu_i$) is only conditioned upon its own private observations, the agents can act in a decentralized manner during execution. Furthermore, it should be noted that since each $\hat{Q}_i^\mu$ is learned separately, agents can have different rewards. For ease of exposition, we drop the dependency notation $\mu$ from $Q_i^\mu$ and $\hat{Q}_i^\mu$, as well as the index $i$ from  $\hat{Q}_i^\mu$ by assuming that all CTDE agents have a similar reward structure. Therefore, hereinafter  ${Q}_i (o_i, a_i)$ and $\hat{Q} (\mathbf{o}, \mathbf{a}_{i})$ refer to decentralized and centralized action-value functions, respectively.  


Figure~\ref{DDPG_model} depicts the CTDE-DDPG framework that is composed of a control strategy layer with $N$ agents, where each agent is implemented by the DDPG algorithm. Using the CTDE-DDPG framework, the single agent evaluation network has access to additional information during the centralized offline training stage, such as observations and actions of other EV charging controller agents, in addition to the local observation.
In particular, at any given time step $h$, $\{o_i^h, a_i^h, r_i^h, {o^{\prime}}_i^{h}\}$ is saved in the replay buffer associated with the agent $i$, where ${o^{\prime}}_i^h$ denotes the next time step observation. As shown in Fig.~\ref{DDPG_model}, when updating the parameters of the actor and the critic according to the inputted mini-batch of transitions, the actor chooses an action according to the local observation $o_i^h$, where $a^i = \mu_{i}(o_i^h)$. The actions are criticized by the critic, where $\mathbf{o}_{i,nor}^h$, $\mathbf{a}_{i,nor}^h$, and $\mathbf{o^{\prime}}_{i,nor}^h$  denote the normalized joint action, observation, and next state observation, respectively. 

\begin{figure*}[t]
   \centering
    \includegraphics[scale=0.415, trim = 0.1cm 1cm 0.1cm 1.1cm, clip]{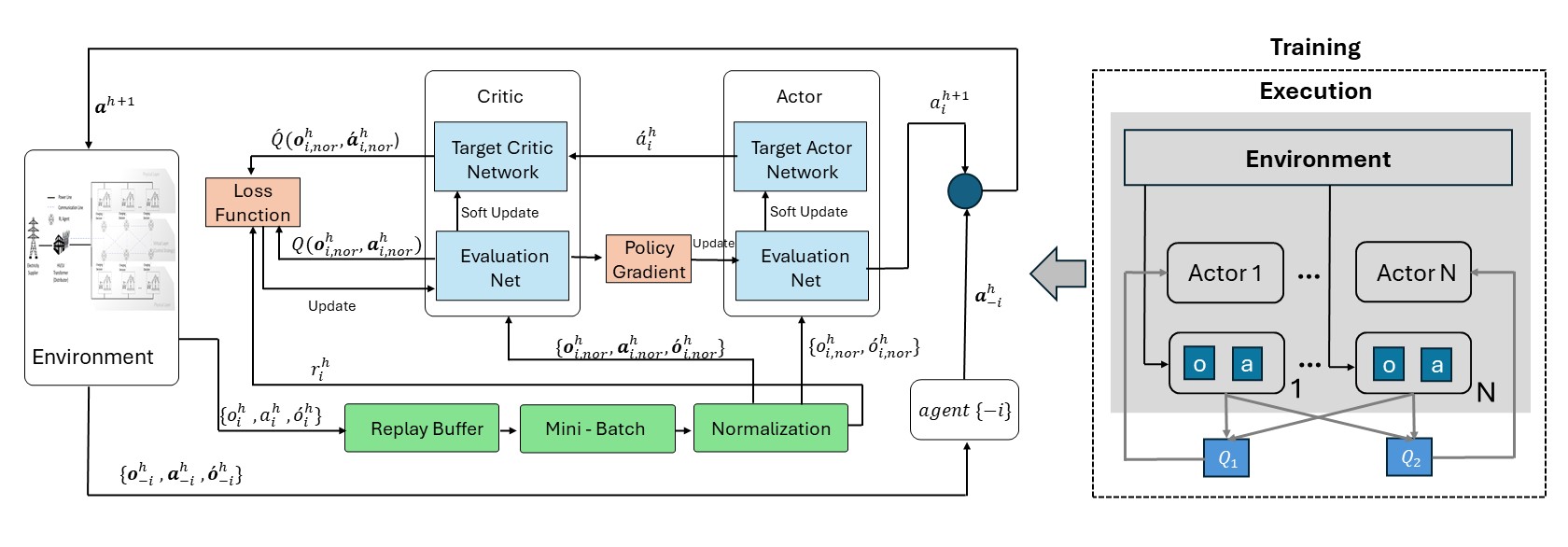}
    \caption{Centralized training decentralized execution (CTDE) multi-agent reinforcement learning framework for EV charging network control.}
    \label{DDPG_model}
    \vspace{-.15in}
\end{figure*}

\vspace{-.3cm}
\subsection{Analysis of Multi-Agent DDPG Methods}
Theoretical analysis for multi-agent settings is critical in order to capture various factors, including nonstationarity that can arise due to the interactions between multiple agents. Dealing with nonstationarity is a significant challenge in MARL because algorithms often assume a stationary environment, where the statistics of the system remain constant. Adopting to nonstationarity requires continuous learning and adjustment of the agents' policies. The nonstationarity can contribute to increased variance in the learning process. Variance can arise from various sources, including stochasticity in the environment, the agents' exploration strategy, and the learning algorithm itself. In MARL, variance analysis becomes more complex due to the interaction and dependencies between multiple agents. The action of an agent can influence the observations and rewards of other agents, leading to increased variance in the learning process. Variance can affect the stability and convergence of learning algorithms. High variance in policy gradient estimates can lead to a larger spread of values, making it challenging to accurately estimate the true gradient. This can result in slower convergence, meaning that more samples may be required to obtain a reliable gradient estimate. 

Several researches have analyzed the variance of {stochastic} policy gradient methods~\cite{weaver2013optimal,konda1999actor,lyu2021contrasting,kuba2021settling}. 
In particular,~\cite{weaver2013optimal} designed an advantage function using a baseline without adding any additional bias to the gradient. In~\cite{konda1999actor}, the authors provide a temporal difference (TD) error as an unbiased estimate of the advantage function.
In other works,~\cite{lyu2021contrasting,kuba2021settling} compared centralized and decentralized frameworks in terms of bias and variance.
All of the aforementioned research has focused on analyzing the variance of \emph{stochastic} policy gradients, while less attention has been paid to the theoretical analysis of \emph{deterministic} policies.

In this section, we provide a detailed analysis of the expectation and variance of the policy gradients for the proposed I-DDPG and CTDE-DDPG frameworks.  We first show that the gradient updates in \eqref{IDDPG_return} and \eqref{CTDE_return} are the same in expectation. Next, we prove that the variance of the policy gradient in \eqref{CTDE_return} is at least as large as the variance of I-DDPG. 
To this end,  we rely on the following assumptions.
\begin{assumption}
The state space $\mathcal{S}$ is either discrete and finite, or continuous and compact. 
\end{assumption}
\begin{assumption}
Every agent's action space $\mathcal{A}_i$ is continuous and compact.  
\end{assumption}
\begin{assumption}
For any agent $i$, state $s \in \mathcal{S}$, and action $a_i \in \mathcal{A}_i$, the mapping $\theta_i \rightarrow \mu_{i}(a_i|o_i)$, $Q_i(o_i,a_i)$, and $\hat{Q}(\mathbf{o},\mathbf{a})$ are continuously differentiable.    
\end{assumption}
\begin{lem}\label{Lemma1}
After the convergence of the critic network for CTDE-DDPG and I-DDPG, the following equality holds:
\vspace{-0.05in}
\begin{align}
\nabla_{a_i}Q_i(o_{i}, a_{i})  =  \mathbb{E}_{a_{-i}, o_{-i} \sim \mathcal{D}} \left[\nabla_{a_i}\hat{Q}(\mathbf{o}, a_i,\mathbf{a}_{-i}) \right].
\end{align}
\end{lem}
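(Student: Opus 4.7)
The plan is to exploit the fact that, after convergence of both critic networks under their respective mean-square Bellman residuals, $Q_i(o_i,a_i)$ and $\hat{Q}(\mathbf{o},\mathbf{a})$ can each be identified as the expected return conditioned on their inputs under the replay-buffer distribution, and then to relate the two through the tower property of conditional expectation.

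First I would argue that at the fixed point of the critic update for I-DDPG, the network satisfies $Q_i(o_i,a_i) = \mathbb{E}[r_i + \gamma Q_i(o_i', a_i') \mid o_i, a_i]$, where the expectation is taken under the distribution induced by the replay buffer $\mathcal{D}$. Iterating this identity, or equivalently unrolling the Bellman recursion, yields $Q_i(o_i,a_i) = \mathbb{E}[G_i \mid o_i, a_i]$ with $G_i = \sum_{t \ge 0}\gamma^t r_i^t$ the discounted return for agent $i$. The same argument applied to the centralized critic gives $\hat{Q}(\mathbf{o},\mathbf{a}) = \mathbb{E}[G_i \mid \mathbf{o},\mathbf{a}]$, since the centralized network is trained against the same reward stream but conditions on the full joint observation-action tuple.

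Next, because the $\sigma$-algebra generated by $(\mathbf{o},\mathbf{a})$ refines the one generated by $(o_i,a_i)$, the tower property of conditional expectation yields
\begin{equation*}
Q_i(o_i,a_i) \;=\; \mathbb{E}\!\left[\hat{Q}(\mathbf{o},\mathbf{a}) \,\middle|\, o_i,a_i\right] \;=\; \mathbb{E}_{o_{-i},a_{-i}\sim \mathcal{D}}\!\left[\hat{Q}(\mathbf{o}, a_i, \mathbf{a}_{-i})\right],
\end{equation*}
where the final expectation is understood as the conditional distribution of the other agents' observations and actions induced by $\mathcal{D}$ given $(o_i,a_i)$. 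Differentiating both sides with respect to $a_i$, and noting that the integration measure over $(o_{-i},a_{-i})$ does not depend on the specific value of $a_i$ at which we differentiate, a standard dominated-convergence argument (justified by continuous differentiability of $\hat{Q}$ and compactness of the action space through Assumptions 2 and 3) permits interchanging $\nabla_{a_i}$ with $\mathbb{E}$, yielding the claim.

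The principal obstacle lies in the first step: rigorously characterizing the converged critics as conditional expectations of the cumulative return. This requires two ingredients, namely interpreting convergence as attaining the unique fixed point of the projected mean-square Bellman operator on the support of $\mathcal{D}$, and assuming the critic function class is sufficiently expressive to realize this fixed point, which is the standard realizability assumption implicit in actor-critic methods. A secondary subtlety is that the identity relating $Q_i$ to the marginal of $\hat{Q}$ holds only under the conditional (not marginal) replay-buffer distribution, so the notation $\mathbb{E}_{a_{-i},o_{-i}\sim \mathcal{D}}$ appearing in the statement must be read as conditional on the $i$-th components of the sample. Once these ingredients are in place, the remaining steps reduce to routine measure-theoretic manipulations and a standard differentiation-under-the-integral argument.
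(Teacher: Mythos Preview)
Your proposal is correct and follows essentially the same two-step structure as the paper's proof: first establish the marginal identity $Q_i(o_i,a_i)=\mathbb{E}_{o_{-i},a_{-i}\sim\mathcal{D}}[\hat{Q}(\mathbf{o},a_i,\mathbf{a}_{-i})]$, then differentiate in $a_i$ and exchange gradient and expectation via dominated convergence (the paper explicitly invokes the same theorem and implicitly relies on the same compactness/differentiability assumptions you cite). The only notable difference is that the paper obtains the marginal identity by citing Lemma~1 of \cite{lyu2023centralized}, whereas you derive it yourself by interpreting both converged critics as conditional expectations of the return and applying the tower property; your route is more self-contained and also makes explicit two caveats the paper leaves implicit, namely the realizability/fixed-point assumption on the critics and the reading of $\mathbb{E}_{o_{-i},a_{-i}\sim\mathcal{D}}$ as a conditional expectation given $(o_i,a_i)$.
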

\begin{proof}
From Lemma 1 of \cite{lyu2023centralized}, value function $Q_i(o_{i}, a_{i})$ and $\hat{Q}(\mathbf{o}, a_i,\mathbf{a}_{-i})$ are related to each other as follows:
\begin{align}
Q_i(o_{i}, a_{i})  = \mathbb{E}_{a_{-i}, o_{-i} \sim \mathcal{D}} \left[\hat{Q}(\mathbf{o}, a_i,\mathbf{a}_{-i}) \right], \nonumber
\end{align}
where $\mathbf{a}_{-i}$ denotes the joint action of all agents except agent $i$.
By taking derivative over the agent $i^{th}$ action and considering the dominated convergence theorem~\cite{avigad2012algorithmic} we have:
\vspace{-0.05in}
\begin{align}
\nabla_{a_i} Q_i(o_{i}, a_{i})  =\nabla_{a_i} \mathbb{E}_{a_{-i}, o_{-i} \sim \mathcal{D}} \left[\hat{Q}(\mathbf{o}, a_i,\mathbf{a}_{-i}) \right] \nonumber \nonumber \\
= \mathbb{E}_{a_{-i}, o_{-i} \sim \mathcal{D}} \left[\nabla_{a_i} \hat{Q}(\mathbf{o}, a_i,\mathbf{a}_{-i}) \right], \nonumber
\end{align}
which completes the proof.
\end{proof}

In \eqref{IDDPG_return} and \eqref{CTDE_return}, the difference between the I-DDPG and CTDE-DDPG gradient calculations lies in their respective uses of $Q^i(o_{i}, a_{i})$  and $\hat{Q}(\mathbf{o}, a_i,\mathbf{a}_{-i})$. I-DDPG's reliance is on the random variables $o_i$ and $a_i$ alone, while CTDE-DDPG also takes into account additional random variables $\mathbf{a}_{-i}$ and $\mathbf{o}_{-i}$. This implies that within the CTDE-DDPG schema, agents are required to account for the actions of their peers as well as their own. The expected value of the value function mirrors the collective average of all potential joint actions within the environment. Therefore, according to Lemma~\ref{Lemma1}, the decentralized value function converges to reflect the marginal expectation of the centralized value function. Thus, using Lemma~\ref{Lemma1} we have the following theorem.
\begin{thm}\label{Theorem1}
After convergence of the critic network, the CTDE-DDPG and I-DDPG policy gradients are equal in expectation.   
\end{thm}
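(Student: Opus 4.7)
The plan is to show that the CTDE-DDPG gradient in \eqref{CTDE_return} collapses to the I-DDPG gradient in \eqref{IDDPG_return} once we integrate out the extra random variables $(o_{-i},a_{-i})$ that appear in the centralized critic. Lemma~\ref{Lemma1} does exactly the work we need at the level of the action-derivative of the critic, so the whole argument is a short chain of identities built around it.

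Concretely, first I would start from \eqref{CTDE_return} and apply the tower property to split the joint expectation as
\begin{align*}
\nabla_{\theta_i} J_c(\theta_i^\mu)
&= \mathbb{E}_{o_i,a_i\sim\mathcal{D}}\Bigl[\,\mathbb{E}_{o_{-i},a_{-i}\sim\mathcal{D}\mid o_i,a_i}\bigl[\nabla_{\theta_i}\mu_i(a_i\mid o_i)\,\nabla_{a_i}\hat{Q}(\mathbf{o},\mathbf{a})\bigr]\Bigr]\Big|_{a_i=\mu_i(o_i)}.
\end{align*}
Second, I would observe that $\nabla_{\theta_i}\mu_i(a_i\mid o_i)$ is measurable with respect to $o_i$ alone and therefore factors out of the inner conditional expectation. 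Third, I would apply Lemma~\ref{Lemma1} to the remaining inner term to rewrite it as $\nabla_{a_i}Q_i(o_i,a_i)$. What is left is exactly the right-hand side of \eqref{IDDPG_return}, which concludes the argument.

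The main subtlety, and where I would spend the most care, is interpreting Lemma~\ref{Lemma1} as a \emph{conditional} identity on $(o_i,a_i)$ rather than an unconditional one: since $Q_i$ is a function of $(o_i,a_i)$ only, the expectation over $(o_{-i},a_{-i})$ in the lemma must be read as conditioned on $(o_i,a_i)$, and this is precisely the form invoked after the tower-property step above. A secondary technical point is the interchange of $\nabla_{a_i}$ and the expectation over $(o_{-i},a_{-i})$, which is licensed by Assumption~3 together with the dominated convergence theorem, already used in the proof of Lemma~\ref{Lemma1}. Beyond these measurability bookkeeping steps, no new computation is needed, so I do not anticipate any substantive obstacle; the theorem is really a direct corollary of Lemma~\ref{Lemma1}.
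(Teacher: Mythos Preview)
Your proposal is correct and follows essentially the same approach as the paper: both arguments reduce Theorem~\ref{Theorem1} to Lemma~\ref{Lemma1} by passing the factor $\nabla_{\theta_i}\mu_i(a_i\mid o_i)$ through the expectation over $(o_{-i},a_{-i})$. The only cosmetic difference is direction---the paper starts from \eqref{IDDPG_return} and substitutes Lemma~\ref{Lemma1} to reach \eqref{CTDE_return}, whereas you start from \eqref{CTDE_return} and use the tower property to collapse back to \eqref{IDDPG_return}---and your explicit remarks on the conditional reading of Lemma~\ref{Lemma1} and the $\nabla_{a_i}$/expectation interchange are, if anything, a bit more careful than the paper's write-up.
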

\begin{proof}
Inspired by~\cite{lyu2021contrasting} and from Lemma~\ref{Lemma1}, the decentralized value function becomes a marginal expectation of the centralized value function after convergence. Thus, substituting Lemma~\ref{Lemma1} in \eqref{IDDPG_return}, we have:
\begin{align}
    &\nabla_{\theta_i} J_d(\theta_i^\mu) = \mathbb{E}_{o_i,a_i \sim \mathcal{D}} \left[\nabla_{\theta_i}\mu_i(a_i|o_i)\nabla_{a_i}Q_i(o_i,a_i)\right] &&\nonumber \\
    &= \mathbb{E}_{o_i,a_i \sim \mathcal{D}} \left[\nabla_{\theta_i}\mu_i(a_i|o_i)\mathbb{E}_{a_{-i}, o_{-i} \sim \mathcal{D}} \left[\nabla_{a_i} \hat{Q}(\mathbf{o}, a_i,\mathbf{a}_{-i}) \right]\right] &&\nonumber \\
    &= \mathbb{E}_{o,a \sim \mathcal{D}} \left[\nabla_{\theta_i}\mu_i(a_i|o_i)\nabla_{a_i} \left[\hat{Q}(\mathbf{o}, a_i,\mathbf{a}_{-i}) \right]\right]  &&\nonumber \\
    &= \mathbb{E}_{o,a \sim \mathcal{D}} \left[\nabla_{\theta_i}\mu_i(a_i|o_i)\nabla_{a_i} \hat{Q}(\mathbf{o}, \mathbf{a})\right] &&\nonumber \\
    &= \nabla_{\theta_i} J_c(\theta_i^\mu), \nonumber
\end{align}
which illustrates the policy gradients of CTDE-DDPG and I-DDPG are equal in expectation.
\end{proof}

Theorem~\ref{Theorem1} implies that once the critic networks have converged, the expected gradients of the actors in both I-DDPG and CTDE-DDPG are identical. This shows that on average, the suggested policy improvements from both algorithms are unbiased and equivalent. In essence, 
neither algorithm consistently outperforms the other in terms of the expected policy gradients. This implies that in terms of expectation, the performance of one method does not always dominate that of the other method. Our numerical evaluation for the EV network confirms this observation. 
In the following theorem, we investigate the policy gradient variances of CTDE-DDPG and I-DDPG and show that the variance of CTDE is greater than the variance of I-DDPG.
\begin{thm}\label{Theorem2}
After the convergence of the critic networks, the variance of the policy gradient of CTDE-DDPG is greater than that of the I-DDPG framework. 
\end{thm}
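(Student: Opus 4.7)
The plan is to deduce the inequality from the law of total variance, using Lemma~\ref{Lemma1} together with Theorem~\ref{Theorem1} to match the I-DDPG gradient to a conditional expectation of the CTDE-DDPG gradient.

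First, I would fix an agent $i$ and, for notational compactness, denote the per-sample CTDE-DDPG estimator by
\begin{equation*}
G_c \;=\; \nabla_{\theta_i}\mu_i(a_i\mid o_i)\,\nabla_{a_i}\hat{Q}(\mathbf{o},\mathbf{a}),
\end{equation*}
with $(\mathbf{o},\mathbf{a})\sim\mathcal{D}$, and the per-sample I-DDPG estimator by
\begin{equation*}
G_d \;=\; \nabla_{\theta_i}\mu_i(a_i\mid o_i)\,\nabla_{a_i}Q_i(o_i,a_i),
\end{equation*}
with $(o_i,a_i)\sim\mathcal{D}$. Since the gradients are vector-valued, I would work with the component-wise variance (equivalently, the trace of the covariance matrix), so that the desired statement reduces to proving $\mathrm{Var}(G_c)\succeq \mathrm{Var}(G_d)$ componentwise, which follows once it holds for an arbitrary scalar component.

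Next, I would apply the law of total variance to $G_c$ conditional on $(o_i,a_i)$:
\begin{equation*}
\mathrm{Var}(G_c) \;=\; \mathbb{E}_{o_i,a_i}\!\left[\mathrm{Var}\!\left(G_c\mid o_i,a_i\right)\right] + \mathrm{Var}_{o_i,a_i}\!\left(\mathbb{E}\!\left[G_c\mid o_i,a_i\right]\right).
\end{equation*}
The factor $\nabla_{\theta_i}\mu_i(a_i\mid o_i)$ is $(o_i,a_i)$-measurable, so it can be pulled outside the inner conditional expectation. Invoking Lemma~\ref{Lemma1} then gives
\begin{equation*}
\mathbb{E}\!\left[G_c\mid o_i,a_i\right] \;=\; \nabla_{\theta_i}\mu_i(a_i\mid o_i)\,\mathbb{E}_{o_{-i},a_{-i}}\!\left[\nabla_{a_i}\hat{Q}(\mathbf{o},\mathbf{a})\mid o_i,a_i\right] \;=\; G_d,
\end{equation*}
so the second term on the right is exactly $\mathrm{Var}(G_d)$. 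Because the first term is non-negative, I conclude $\mathrm{Var}(G_c)\geq\mathrm{Var}(G_d)$, with equality only in the degenerate case in which $G_c$ is $(o_i,a_i)$-measurable (i.e.\ when $\nabla_{a_i}\hat{Q}$ has no additional randomness conditional on the agent's own observation-action pair). Taking the trace over the gradient's coordinates preserves the inequality and yields the claim for the aggregate variance.

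The main obstacle I anticipate is justifying the conditioning step cleanly under the replay-buffer distribution: the marginalization identity from Lemma~\ref{Lemma1} must hold under the \emph{same} sampling distribution that defines $G_c$. I would make this precise by invoking Assumptions~2--4 (compactness of $\mathcal{A}_i$ and continuous differentiability of $\mu_i$ and $\hat{Q}$) to guarantee integrability, so that the dominated convergence theorem used in Lemma~\ref{Lemma1} still applies when exchanging $\nabla_{a_i}$ with the conditional expectation, and by noting that the buffer $\mathcal{D}$ contains joint transitions so that $\mathbb{E}[\cdot\mid o_i,a_i]$ is well-defined. Everything else is a direct, routine application of the variance decomposition.
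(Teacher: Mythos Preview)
Your proposal is correct and follows essentially the same route as the paper: both arguments use Lemma~\ref{Lemma1} to identify the I-DDPG estimator $G_d$ with $\mathbb{E}[G_c\mid o_i,a_i]$, and then obtain $\mathrm{Var}(G_c)\geq\mathrm{Var}(G_d)$ from the nonnegativity of the conditional-variance term. The only cosmetic difference is that the paper expands the second-moment difference by hand (writing $A_i=(\nabla_{\theta_i}\mu_i)(\nabla_{\theta_i}\mu_i)^T$ and reducing to $\mathbb{E}_{o_i,a_i}[A_i\sum_j\mathrm{Var}_j(\nabla_{a_i}\hat Q)]$) rather than invoking the law of total variance by name, but the underlying decomposition is identical.
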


\begin{proof}
We start the proof by redefining \eqref{IDDPG_return} and \eqref{CTDE_return} as follows:
\begin{align}
&{\mathbf{g}_{d,i}} = \nabla_{\theta_i} \mu_i(a_{i}|o_{i}) \nabla_{a_i}Q_i(o_{i}, a_{i}),   \nonumber \\
&{\mathbf{g}_{c,i}} = \nabla_{\theta_i} \mu_i(a_{i}|o_{i}) \nabla_{a_i}\hat{Q}(\mathbf{o}, a_i, \mathbf{a}_{-i}). \nonumber
\end{align}
Given Theorem 1, we know that $\mathbf{g}_{d,i}$ and $\mathbf{g}_{c,i}$ have the same expectation as $\zeta = \mathbb{E}\left[\mathbf{g}_{d,i} \right] = \mathbb{E}\left[\mathbf{g}_{c,i} \right]$. Using the variance definition we have:
\begin{flalign}\label{Variancelaw}
&\mathbf{Var}_{o,a \sim \mathcal{D}} \left[{\mathbf{g}_{c,i}}\right] - \mathbf{Var}_{o_i,a_i \sim \mathcal{D}} \left[{\mathbf{g}_{d,i}}\right] &&\nonumber \\
&= \left(\mathbb{E}_{o,a \sim \mathcal{D}} \left[{\mathbf{g}_{c,i} \mathbf{g}_{c,i}
^T} \right] - \zeta\zeta^T \right) -  \left(\mathbb{E}_{o_i,a_i \sim \mathcal{D}} \left[{\mathbf{g}_{d,i} \mathbf{g}_{d,i}
^T} \right] - \zeta\zeta^T \right) &&\nonumber \\ 
&=  \left(\mathbb{E}_{o,a \sim \mathcal{D}} \left[{\mathbf{g}_{c,i} \mathbf{g}_{c,i}
^T} \right] \right) -  \left(\mathbb{E}_{o_i,a_i \sim \mathcal{D}} \left[{\mathbf{g}_{d,i} \mathbf{g}_{d,i}
^T} \right] \right) &&\nonumber \\
&=  \left(\mathbb{E}_{o,a \sim \mathcal{D}} \left[\left( \nabla_{\theta_i} \mu_i(a_{i} | o_i) \right)\left(\nabla_{\theta_i} \mu_i(a_{i} | o_i) \right)^T \left|\left|\nabla_{a_i}\hat{Q}(\mathbf{o}, a_i,\mathbf{a}_{-i})\right|\right|^2  \right] \right) &&\nonumber \\
& -  \left(\mathbb{E}_{o_i,a_i \sim \mathcal{D}} \left[\left( \nabla_{\theta_i} \mu_i(a_{i} | o_i) \right)\left(\nabla_{\theta_i} \mu_i(a_{i} | o_i) \right)^T \left|\left|\nabla_{a_i}Q_i(o_{i}, a_{i})\right|\right|^2 \right] \right). \nonumber &&
\end{flalign}
We define $A_i = \left( \nabla_{\theta_i} \mu_i(a_{i}|o_i) \right)\left(\nabla_{\theta_i} \mu_i(a_{i}|o_i) \right)^T $. Now, considering Lemma 1 we have:
\begin{flalign}
&\mathbf{Var}_{o,a \sim \mathcal{D}} \left[\mathbf{g}_{c,i}\right] - \mathbf{Var}_{o_i,a_i \sim \mathcal{D}} \left[\mathbf{g}_{d,i}\right] && \nonumber \\
&\quad =  \mathbb{E}_{o,a \sim \mathcal{D}} \left[\left|\left|\nabla_{a_i}\hat{Q}(\mathbf{o}, a_i,\mathbf{a}_{-i})\right|\right|^2 A_i \right] && \nonumber \\
&\quad \quad \quad -  \left(\mathbb{E}_{o_i,a_i \sim \mathcal{D}} \left[\left|\left|\nabla_{a_i}Q_i(o_{i}, a_{i})\right|\right|^2 A_i \right] \right) \nonumber \\
&\quad =  \mathbb{E}_{o_i,a_i \sim \mathcal{D}} \left[\mathbb{E}_{o_{-i},a_{-i} \sim \mathcal{D}}\left[\left|\left|\nabla_{a_i}\hat{Q}(\mathbf{o}, a_i,\mathbf{a}_{-i})\right|\right|^2 A_i \right]\right] && \nonumber \\
&\quad \quad \quad -  \left(\mathbb{E}_{o_i,a_i \sim \mathcal{D}} \left[\left|\left|\nabla_{a_i}Q_i(o_{i}, a_{i})\right|\right|^2 A_i \right] \right) \nonumber \\
& \quad = \mathbb{E}_{o_i,a_i \sim \mathcal{D}}\left[A_i\left(\mathbb{E}_{o_{-i},a_{-i} \sim \mathcal{D}}\left[\left|\left|\nabla_{a_i}\hat{Q} (\mathbf{o}, a_i,\mathbf{a}_{-i})\right|\right|^2 \right] \right. \right. && \nonumber \\ 
&\qquad \qquad \qquad \qquad - \left. \left. \left|\left|\mathbb{E}_{o_{-i},a_{-i} \sim \mathcal{D}}\left[\nabla_{a_i}\hat{Q} (\mathbf{o}, a_i,\mathbf{a}_{-i})\right] \right|\right|^2 \right) \right] && \nonumber \\
&\quad = \mathbb{E}_{o_i,a_i \sim \mathcal{D}}\left[A_i \sum\limits_{j=1}^{K}{\mathbf{Var}_j \left(\nabla_{a_i}\hat{Q} (\mathbf{o}, a_i,\mathbf{a}_{-i})\right)}  \right] && \nonumber \\
&\quad \leq \mathbb{E}_{o_i,a_i \sim \mathcal{D}}\left[B \sum\limits_{j=1}^{K}{\mathbf{Var}_j \left(\nabla_{a_i}\hat{Q} (\mathbf{o}, a_i,\mathbf{a}_{-i})\right)},  \right] &&
\end{flalign} 
where 
$B = \max\limits_{1 \leq j \leq K} \|\nabla_{\theta_i} [\mu_i(a_{i}|o_i)]_j\|^2$ is the upper-bound of the gradient along the $j$-th dimension of the action space, where $K$ is the dimension of the action space. 
\end{proof}


This theorem illustrates that a CTDE learner's gradient estimator incurs additional variance due to exploration by other agents. To elaborate, as the value function converges, it becomes evident that the CTDE-DDPG framework exhibits a higher degree of variance compared to I-DDPG framework. This increased variance can be attributed to the interplay between multiple agents, each exploring the environment to learn.

Despite the higher policy gradient variance in the centralized critic setup, all agents share a common value function. This shared value function fosters more consistent and cohesive learning performance, as it benefits from the collective experiences of all agents.
This characteristic helps CTDE to mitigate the issues of nonstationarity encountered by decentralized critics, thereby leading to a more stable and reliable learning process.
On the other hand, even though the I-DDPG method has a smaller policy gradient variance, it results in less stable learning targets, espeically as the number of agents increases~\cite{lyu2023centralized}.
 Therefore, considering learning stability and policy gradient variance, a trade-off exists within MARL frameworks, underscoring the importance of careful planning and management of such systems. Our numerical results in Section~\ref{Section:Results} confirm that despite the higher variances in the policy gradient and convergence complexity, the CTDE method provides performance gains due to its cooperative nature.

\vspace{-.3cm}
\section{Numerical Results}\label{Section:Results}
\begin{figure}[t]
   \centering
    \includegraphics[scale=0.35, trim = 0.35cm 0.25cm 0.2cm 0.3cm, clip]{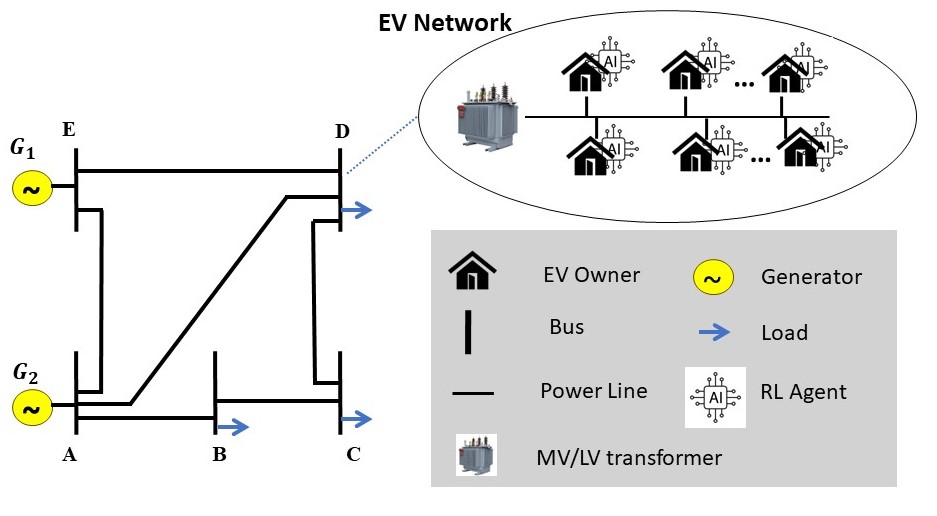}
    \caption{IEEE 5-bus testbed system. An example of how the proposed EV charging network can be integrated into a distribution system.}
    \label{fig:system_model_result}
    \vspace{-.02in}
\end{figure}
In this section, we present comprehensive numerical results to compare the performance of  CTDE-DDPG and I-DDEP methods for EV charging control. 
First, we describe the experimental setup, followed by illustrating the impacts of cooperative value function learning. Next, we present our results on  convergence, scalability, and robustness of both frameworks.
\begin{table}[!t]
\caption{HYPERPARAMETERS}
\resizebox{\columnwidth}{!}{
\centering
  {
    \begin{tabular}{l|l|l}
      \hline
      \textbf{Hyperparameters} & \textbf{Centralized Critic} & \textbf{Decentralized Critic}\\
      \hline
      Batch size & 100 & 100 \\
      Discount factor & ${\gamma}$=0.95 & $\gamma$=0.95 \\
      Actor/Critic Optimizer & Adam/Adam & Adam/Adam\\
      Actor Learning Rate/Weight-decay & 0.003/0.0001 & 0.005/0.0003\\
      Critic Learning Rate/Weight-decay & 0.001/0.0001 & 0.001/0.0001\\
      Target Smoothing & $\tau$=0.005 & $\tau$=0.005 \\
      Actor Layers/Nodes & 4/[100,150,100,1] & 4/[100,150,100,1] \\
      Critic Layers/Nodes & 4/[150,200,150,1] & 4/[150,200,150,1] \\
      Actor Activation Functions & [leaky-relu,leaky-relu,leaky-relu,sigmoid] & [rrelu,rrelu,rrelu,sigmoid] \\
      Critic Activation Functions & [leaky-relu,leaky-relu,leaky-relu,linear] & [leaky-relu,leaky-relu,leaky-relu,linear] \\
      Reply Buffer Size & 1000000 & 1000000 \\
      Training Noise & Normal with Decreasing std & Normal with Decreasing std \\
      \hline
      \hline
    \end{tabular}
    }}
 \label{tab:SimulationParam}
 \vspace{-0.1in}
\end{table}
\begin{figure*}[!t]
   \centering
    \includegraphics[scale=0.48, trim = 0.2cm 0.2cm 0.2cm 0.3cm, clip]{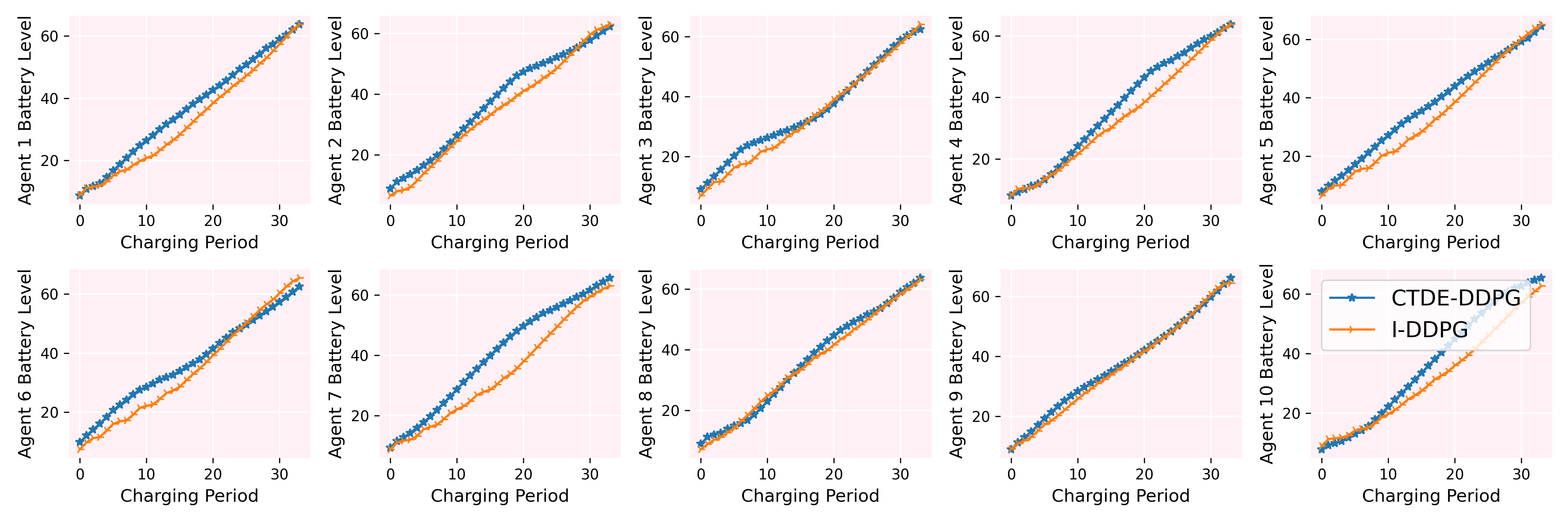}
    \caption{Performance comparison for CTDE-DDPG and I-DDPG frameworks in terms of average battery level during the charging period for 10 agent scenarios.}
    \label{fig:avg_batt_10agents}
    \vspace{-.1in}
\end{figure*}
\begin{figure*}[!t]
   \centering
    \includegraphics[scale=0.48, trim = 0.2cm 0.2cm 0.2cm 0.2cm, clip]{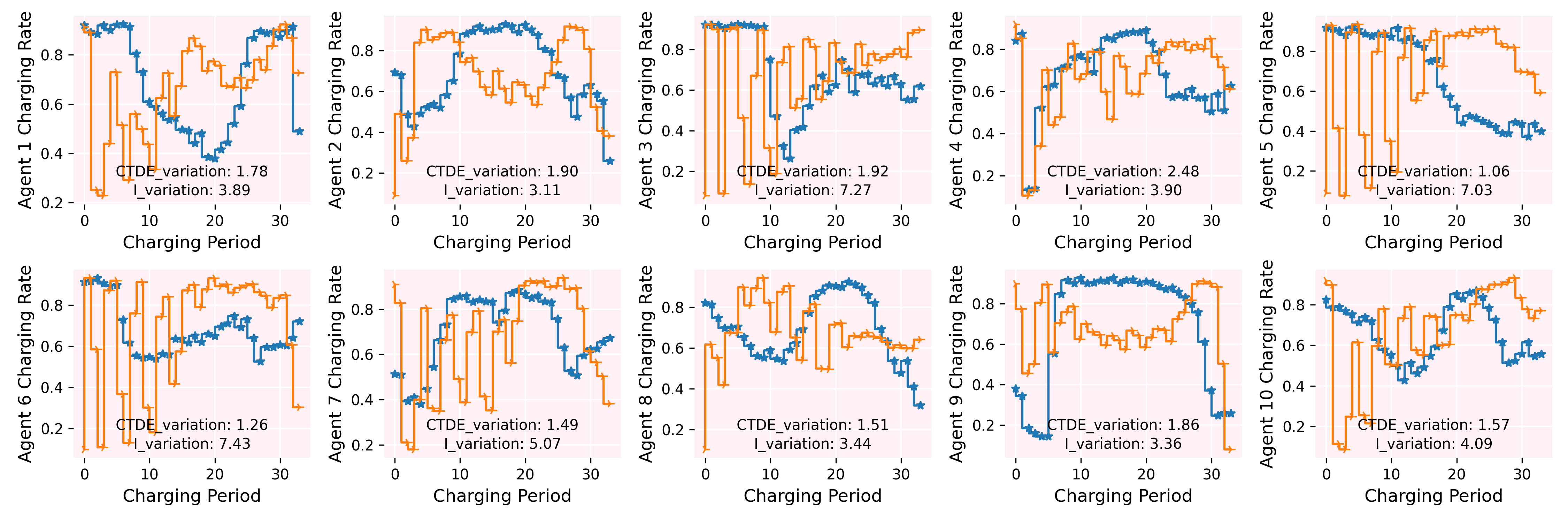}
    \caption{Charging behavior comparison between CTDE-DDPG and I-DDPG frameworks in terms of average charging rate over the charging period for 10 agent scenarios, where one denotes the full rate charging.}
    \label{fig:avg_action_10agents}
    \vspace{-.1in}
\end{figure*}
\vspace{-0.2cm}
\subsection{Experimental Setting}
\noindent 
To assess the performance of our proposed EV charging control, we conducted simulations based on the system model depicted in Fig.~\ref{System Model}. The system model under discussion is designed for compatibility with all IEEE-compliant active distribution system, operating within the framework of distributed locational marginal pricing (DLMP)~\cite{9262887}. As an illustration, we refer to the IEEE 5-bus system shown in Fig.~\ref{fig:system_model_result}, where the integration of the EV network takes place at bus D. This integration leverages the DLMP scheme, whereby each bus within the system is allocated a distinct electricity pricing structure, influenced by local demand dynamics. Specifically, the pricing regime at bus D is closely connected to the demand attributes of the network segments connected to this bus. To facilitate the necessary adjustments in voltage levels, a Medium Voltage/Low Voltage (MV/LV) transformer is linked to bus D. This connection guarantees that the downstream voltage requirements are met adequately. Each EV owner is equipped with an EV charging controller and a smart meter integrated with an RL agent. The main goal of each agent is to maximize its individual learning reward, which is to minimize charging costs and satisfy charging constraints. 
In our simulations, we 
implemented the centralized and decentralized framework using Python3 with PyTorch v2.1.0. All simulations were performed via episodic updating across $10,000$ episodes, each of which represents a charging cycle. A cycle consists of 34 iterations. The hyperparameters and simulation setups used  are listed in Table~\ref{tab:SimulationParam}.

\begin{figure*}[!t]
   \centering
    \includegraphics[scale=0.36, trim = 0.1cm 0.2cm 0.2cm 0.2cm, clip]{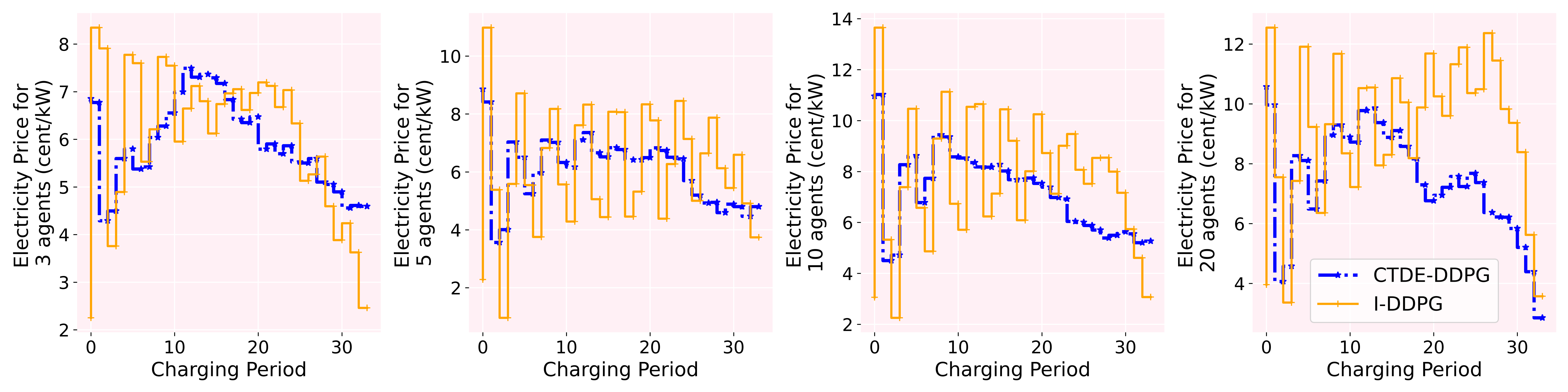}
    \caption{The average electricity price of CTDE-DDPG and I-DDPG for 3, 5, 10, and 20 agents scenarios. }
    \label{fig:avg_price}
    \vspace{-.15in}
\end{figure*}
\begin{figure*}[!t]
   \centering
    \includegraphics[scale=0.36, trim = 0.1cm 0.2cm 0.2cm 0.12cm, clip]{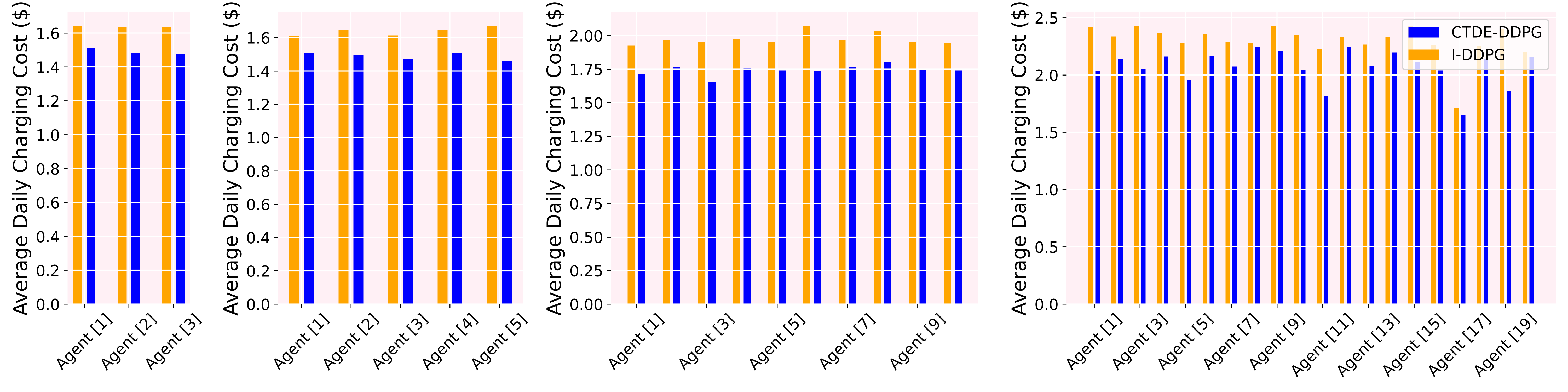}
    \caption{The average charging cost of CTDE-DDPG and I-DDPG for 3, 5, 10, and 20 agents scenarios. }
    \label{fig:avg_cost}
    \vspace{-.15in}
\end{figure*}
\vspace{-0.1cm}
\subsection{MARL General Performance}
In the following, we investigate the general performance of the proposed CTDE-DDPG framework and compare it with I-DDPG. It is worth mentioning that in both frameworks, the primary goal of each individual agent is to fully charge the EVs at the end of the charging phase to meet the demand of the EV owner. Since each individual agent seeks to minimize their charging costs (i.e., maximize the learning return) under dynamic pricing, each agent observes the price signal as a feedback from the environment. To perform the general performance analysis,  we compare the average state-of-the-charge of the batteries over the charging phase in the case of 10 agents. This refers to 10 households with EV charging control, forming a network connected to bus D. To better compare both algorithms, the battery capacity for all 10 agents is considered to be 60kWh in both scenarios. As illustrated in Fig.~\ref{fig:avg_batt_10agents}, both algorithms effectively meet the users' demands for battery charging. This demonstrates successful charging control of EVs within the designated phase by both algorithms.   

\subsection{Cooperative vs Independent Value Function Learning}
For a more comprehensive comparison of the two frameworks, this section delves into the charging patterns exhibited by both algorithms. Additionally, we explore the influence of the number of agents in our system model on the efficacy of the control strategy. Figure~\ref{fig:avg_action_10agents} illustrates the average charging rates of CTDE-DDPG and I-DDPG over the charging period for a scenario with $10$ agents. Utilizing average charging rates enables a more meaningful performance comparison, considering that a single charging cycle may not fully represent the charging behavior of the agent due to the stochastic nature of the environment. Thus, after convergence of the critic network, we execute the EV charging control for 100 more episodes and calculate the average charging rate.
As shown in Fig.~\ref{fig:avg_action_10agents}, in the I-DDPG scenario, the charging behavior exhibits a fluctuating rate, while CTDE-DDPG shows a consistently smooth charging rate throughout the charging phase. To better compare the two scenarios, we define the total variation (denoted by $TV$) for the agnet $i$ and over the charging time $H$ as follows:
\begin{equation}
\vspace{-0.05cm}
    TV_i(H) = \frac{1}{a^\text{max}} \sum\limits_{h =1}^{H}{\mid a^h_i - a_i^{h-1}} \mid,
\end{equation}
where $H$ denotes the charging duration, $a_i^h$ denotes the agent $i$ charging power at time step $h$ and $a^{\text{max}}$ denotes the maximum charging power allowance.
The total variation in charging behavior during the charging phase is higher in I-DDPG compared to CTDE-DDPG, potentially leading to a degradation in battery lifetime for I-DDPG. Figure~\ref{fig:avg_action_10agents} demonstrates at least a 36$\%$ reduction in the total variation of the average charging rate using CTDE version. The smooth charging pattern exhibited by the CTDE version suggests that the proposed cooperative MARL surpasses the independent MARL version for EV network charging control. 
This observation is further supported when we compare the impact of agents' charging behavior on the charging cost. 

In particular, Fig.~\ref{fig:avg_price} provides a comparative analysis of the average electricity price of the network under both algorithms. The results illustrate that with an increasing number of agents, the disparity in the average electricity price between the two algorithms becomes more pronounced. Additionally, the fluctuations in charging behavior within the I-DDPG scenario lead to corresponding fluctuations in electricity pricing.
In contrast, the CTDE framework exhibits a more consistent electricity price during the charging phase. This consistency underscores the economic advantages of cooperative behavior among agents, highlighting the efficiency of the CTDE approach in maintaining price stability in the EV charging network.  This price consistency, in addition to robust charging behavior in CTDE-DDPG, leads to lower daily costs in a cooperative framework. Figure~\ref{fig:avg_cost} depicts the distinction in daily costs between CTDE-DDPG and I-DDPG by showcasing the average daily cost for scenarios with 3, 5, 10, and 20 agents, respectively. As illustrated, in all scenarios, CTDE-DDPG outperforms I-DDPG by reducing the charging cost for all agents. 
\vspace{-0.35cm}
\subsection{Convergence and Fairness Analysis}
In this section, we investigate how increasing the number of agents in our system model impacts the performance of the two algorithms. In Fig.~\ref{fig:avg_return}, we compare both algorithms' average episode returns. This involves calculating the average return and respective variances of the algorithms. 
The results illustrate that by increasing the number of agents, both the CTDE and I-DDPG algorithms exhibit convergence to a common policy, reflected in similar points in terms of average return. 
Therefore, the results in Fig.~\ref{fig:avg_return} reveal a shared policy behavior between the two algorithms.
However, it should be noted that the variance of the return also increases. This phenomenon is attributed to the nonstationarity nature of the MARL frameworks. 

 However, a significant implication of employing CTDE-DDPG is shown in Fig.~\ref{fig:avg_performance} in which we capture the \emph{fairness} performance metric defined as the ratio of the worst-performing agent to the best-performing agent in terms of average return. We calculate the fairness ratio as the number of agents increases.
As shown in Fig.~\ref{fig:avg_performance}, there is a noticeable decline in I-DDPG performance as the number of agents increases. This decline can be attributed to the absence of cooperation between agents in the I-DDPG, where agents do not consider the policies of other agents within the system. This lack of collaboration adversely impacts the overall performance of I-DDPG in multi-agent scenarios, such that some of the agents may perform poorly compared with best-performing agents.


\begin{figure}[!t]
   \centering
    \includegraphics[width=\linewidth, trim = 0.2cm 0.2cm 0.2cm 0.2cm, clip]{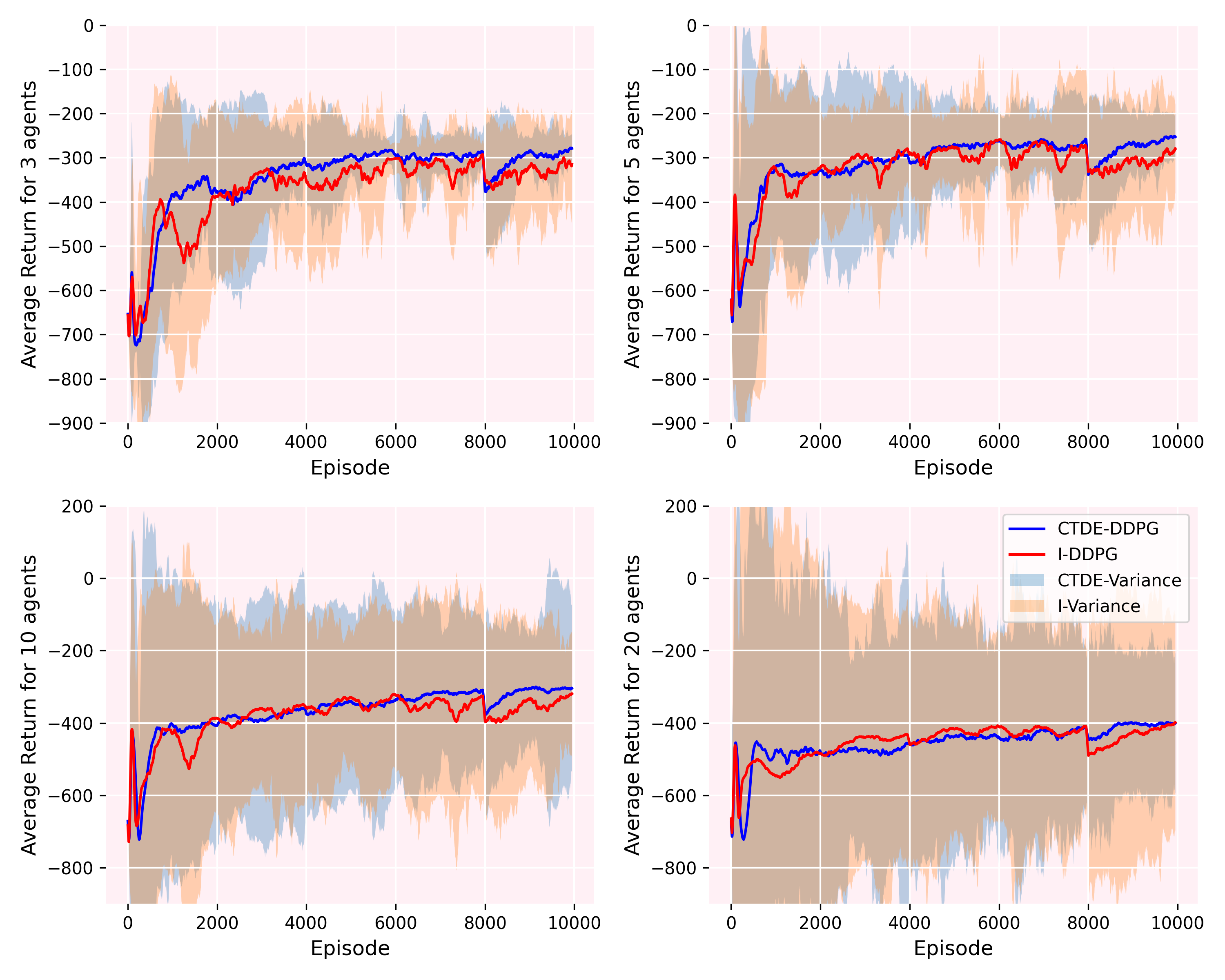}
    \caption{The average episodic reward and its variance for the CTDE-DDPG and I-DDPG methods in 3, 5, 10, and 20 agents scenarios.  }
    \label{fig:avg_return}
    \vspace{-.1in}
\end{figure}
\begin{figure}[!t]
   \centering
    \includegraphics[scale=0.22, trim = 0.2cm 0.2cm 0.2cm 0.2cm, clip]{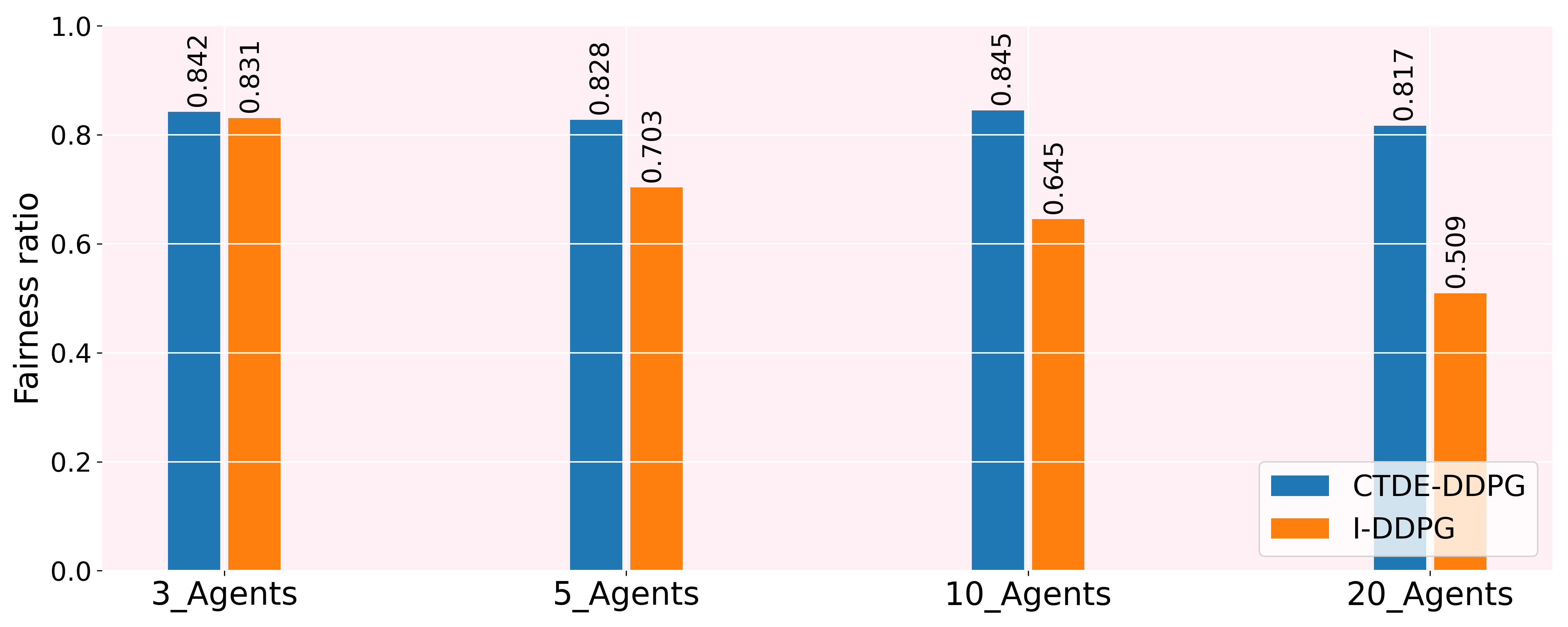}
    \caption{The worse-case to the best-case agents' performance in 3, 5, 10, and 20 agents scenarios.}
    \label{fig:avg_performance}
    \vspace{-.2in}
\end{figure}

\section{Conclusion}\label{Section:Conclusion}
In this paper, we introduced an efficient decentralized framework for EV charging network control. Our approach utilized a centralized training-decentralized execution deep deterministic policy gradient (CTDE-DDPG) reinforcement learning. This framework allows agents to collect additional information from other EVs exclusively during the training phase, while maintaining a fully decentralized strategy during the execution phase. We formulated the charging problem as a decentralized partially observable Markov decision process (Dec-POMDP).  Furthermore, we conducted a comparative analysis between our proposed framework and a baseline approach where independent DDPG (I-DDPG) agents individually solve their local charging problems without any information from other agents, even during the training phase.  We presented a theoretical analysis on the expectation and variance of the policy gradient for the CTDE-DDPG and I-DDPG methods.  
Our simulation results demonstrated that with cooperation between agents in CTDE-DDPG, the overall network cost and the average electricity price decrease, leading to reduced individual costs. Furthermore, our results indicate that compared with I-DDPG, CTDE-DDPG achieves a more robust and fair performance as the number of agents increases.
\vspace{-0.2cm}

\bibliographystyle{IEEEtranN}
\bibliography{refs}

\end{document}